\def\a{\mathbf{a}} %new added
\def\z{\mathbf{z}} %new added
\def\x{\mathbf{x}} %new added
\def\w{\mathbf{w}} %new added
\def\W{\mathbf{W}} %new added
\def\prob{\mathbb{P}} %new added
\def\D{\mathcal{D}} %new added
\def\S{\mathbf{S}} %new added
\def\bfD{\mathbf{D}} %new added
\def\bmd{\bm{\delta}} %new added
\def\mc{\mathcal} %new added
\newtheorem{Lemma}{Lemma} %new added
\newtheorem{Theorem}{Theorem}
\newtheorem{remark}{Remark}
\begin{document}
\title{Differentially Private Generative Adversarial Network}

\author{Liyang Xie$^1$, Kaixiang Lin$^1$, Shu Wang$^2$, Fei Wang$^3$, Jiayu Zhou$^1$}
\affiliation{%
  \institution{$^1$Computer Science and Engineering, Michigan State University\\
               $^2$Department of Computer Science, Rutgers University\\
               $^3$Department of Healthcare Policy and Research, Weill Cornell Medical School
               }
}
\email{{xieliyan, linkaixi}@msu.edu, sw498@cs.rutgers.edu, few2001@med.cornell.edu, jiayuz@msu.edu}

\begin{abstract}
%!TEX root = main.tex
Generative Adversarial Network (GAN) and its variants have recently attracted
intensive research interests due to their elegant theoretical foundation and 
excellent empirical performance as generative models. These tools provide 
a promising direction in the studies where data availability is limited. 
One common issue in GANs is that the density of the learned generative distribution
could concentrate on the training data points, meaning that they can easily \emph{remember}
training samples due to the high model complexity of deep networks. This becomes a major concern 
when GANs are applied to private or sensitive data such as patient medical records, and the concentration of distribution may divulge critical
patient information. To address this issue, in this paper we propose a
differentially private GAN (DPGAN) model, in which we achieve differential privacy in GANs
by adding carefully designed noise to gradients during the learning
procedure. We provide rigorous proof for the privacy guarantee, as well as
comprehensive empirical evidence to support our analysis, where we demonstrate that our method can generate high quality data points at a
reasonable privacy level.
\end{abstract}

%
% The code below should be generated by the tool at
% http://dl.acm.org/ccs.cfm
% Please copy and paste the code instead of the example below.
%

\begin{CCSXML}
<ccs2012>
<concept>
<concept_id>10010147.10010257.10010293.10010294</concept_id>
<concept_desc>Computing methodologies~Neural networks</concept_desc>
<concept_significance>500</concept_significance>
</concept>
<concept>
<concept_id>10010520.10010521.10010542.10010294</concept_id>
<concept_desc>Computer systems organization~Neural networks</concept_desc>
<concept_significance>500</concept_significance>
</concept>
<concept>
<concept_id>10002978.10002991.10002995</concept_id>
<concept_desc>Security and privacy~Privacy-preserving protocols</concept_desc>
<concept_significance>300</concept_significance>
</concept>
</ccs2012>
\end{CCSXML}

\ccsdesc[500]{Computing methodologies~Neural networks}
\ccsdesc[500]{Computer systems organization~Neural networks}
\ccsdesc[300]{Security and privacy~Privacy-preserving protocols}

\keywords{Deep Learning; Differential Privacy; Generative model}

\maketitle
\section{Introduction}
%!TEX root = main.tex

% data sparsity issue. 
%Recent advances in data analytics have lead to many breakthroughs in solving 
%real-world problems, especially in areas with rich amount of data. 
In recent years, more and more data in different application domains are becoming readily available for the rapid development of both computer hardware and software technologies. 
Many data mining methodologies have been developed for analyzing those big data sets. One representative example is deep learning, which typically needs a huge amount of training samples to achieve promising performance. 
However, there exists domains where it is impossible to get as much data as we want. 
Medicine and Health Informatics are such fields. On individual patient level analysis, each patient is treated as a sample in model training process. However, considering the complexity of many diseases, the number of all patients from the whole world is still very small and far from enough. Moreover, we can never get the medical data from all patients for privacy and sensitivity reasons. Further, the expensive and time-consuming data collection process also limits the amount of data. Thus, the problem of building high-quality medical analytics models remains very challenging at present.

%significant challenges remain in areas with limited amount of data available. In the domain of medical informatics, for example, given sufficient data we are able to use existing machine learning algorithms to build high performance predictive models and thus greatly improve the healthcare quality by wisely allocating constrained medical resources. However, not only are there high costs in collecting and processing such medical data, privacy issues associated to the data have also largely limited the amount of data available for public research. Legislations like HIPAA~\cite{act1996health} has strict provisions for medical data usage in terms of privacy and security.  The data scarcity has imposed remarkable challenges in many learning  algorithms and substantially constrained the complexity of models to be used. 

% motivating generative model. 
Generative models~\cite{makhzani2015adversarial, rezende2014stochastic,
mescheder2017adversarial,burda2015importance,li2015generative} have provided
us a promising direction to alleviate the data scarcity issue. By sketching
the data distribution from a small set of training data, we are able to sample
from the distribution and generate much more samples for our study. By combining
the complexity of deep neural networks and game theory, the Generative
Adversarial Network (GAN)~\cite{goodfellow2014generative} and its variants have
demonstrated impressive performance in modeling the underlying data
distribution, generating high quality ``fake" samples that are hard to be
differentiated from real
ones~\cite{salimans2016improved,saito2016temporal,mogren2016c}. Ideally,
with the high quality generative distribution in hand, we can protect the
privacy of raw data by releasing only the distribution instead
of the raw data to the public or constrained individuals, and can even sample
datasets to fit our needs and conduct further analysis. 

% privacy issues associated to the generative models. 
However, the GANs can still implicitly disclose privacy information of the training
samples. The adversarial training procedure and the high model complexity of
deep neural networks, jointly encourage a distribution that is concentrated
around training samples. By repeated sampling from the distribution, there is
a considerable chance of recovering the training
samples~\cite{arjovsky2017wasserstein}. For example,
Hitaj~\emph{etal.}~\cite{hitaj2017deep} introduced an active inference attack
model that can reconstruct training samples from the generated ones.
Therefore, it is highly demanded to have generative models that not only
generates high quality samples but also protects the privacy of the
training data. 

% DPGAN (WGAN appears here only)
%To overcome the aforementioned privacy challenges in generative models, 
With the above considerations, in
this paper we propose a \emph{Differentially Private Generative Adversarial
Network} (DPGAN). DPGAN provides proven privacy control for the training data
from the sense of differential privacy~\cite{dwork2013algorithmic}.
Specifically, our proposed framework applies a combination of carefully
designed noise and gradient clipping, and uses the \emph{Wasserstein
distance}~\cite{arjovsky2017wasserstein} as an approximation of the distance
between probability distributions, which is a more reasonable metric than JS-
divergence in GAN. There are also prior works on studying differential privacy in deep learning models%Differentially private deep learning is studied
~\cite{abadi2016deep}. However, our DPGAN is different from~\cite{abadi2016deep} by clipping only on weights. 
We also proves that the
gradient can be bounded at same time, which avoids unnecessary distortion of
the gradient. This not only keeps the loss function with Lipschitz property but
also provides a sufficient privacy guarantee. Unlike the privacy preserving
deep framework mentioned in~\cite{papernot2017semi}, whose privacy loss is
proportional to the amount of data needed to be labeled in public data set,
the privacy loss of our DPGAN is irrelevant to the amount of generated
data. This makes our methods applicable under a wide variety of real world
scenarios. We evaluate DPGAN under various benchmark datasets
and network structures (fully connected networks and CNN), and demonstrate
that DPGAN can generate high-quality data points with sufficient
protection for differential privacy with reasonable privacy budget. 

The remaining of the paper is structured as follows: first, we will briefly overview the
related literature in Section~\ref{rw}, and then introduce the proposed DPGAN 
framework and theoretical properties in Section~\ref{a&t}. 
Our framework is evaluated in Section~\ref{exp} by the end.

\section{Related work}\label{rw}
%!TEX root = main.tex

In this section, we provide a brief literature review of relevant topics: generative adversarial
network, differential privacy and differentially private learning in neural
networks.

\noindent\textbf{Generative Adversarial Network.} 
GAN and its variants are developed in recent years with important advances
from the theoretical perspective. Instead of clipping the weights, Gulrajani
\emph{et al.}~\cite{gulrajani2017improved} improve the training stability and
performance of WGAN by penalizing the norm of the critical gradients with
respect to its input. Gulrajani \emph{et al.}~\cite{gulrajani2017improved} is aligned
with our differential privacy framework due to controlled value of gradient
norms. 

Zhao \emph{et al.}~\cite{zhao2016energy} introduces energy-based GAN (EBGAN),
which views the discriminator as an energy function that attributes low
energies to the regions near the data manifold and higher energies to other
regions. Similar to the original GANs, a generator is seen as being trained to
produce contrastive samples with minimal energies, while the discriminator is
trained to assign high energies to these generated samples. The  instantiation
of EBGAN framework use an auto-encoder architecture, with the energy being the
reconstruction error, in place of the usual discriminator.  The behavior of
EBGAN has shown to be more stable than regular GANs during training. Berthelot
\emph{et al.}~\cite{berthelot2017began} also use an autoencoder as a
discriminator and developed an equilibrium enforcing method, paired with a
loss derived from the Wasserstein distance. It improves over WGAN by balancing
the power of the discriminator and the generator so as to control the trade-
off between image diversity and visual quality. Qi~\cite{qi2017loss} proposes
a loss-sensitive GAN with Lipschitz assumptions on data distribution and loss
function. It improves WGAN by allowing the generator to focus on improving
poor data points that are far apart from real examples rather than wasting
efforts on those samples that have already been well generated, and thus
improving the overall quality of generated samples. Jones \emph{et
al.}~\cite{beaulieu2017privacy} used differentially private version of
Auxiliary Classifier GAN (AC-GAN) to simulate participants based on the
population of the SPRINT clinical trial. Choi \emph{et
al.}~\cite{choi2017generating} proposed medGAN, which is a generative
adversarial framework that can successfully generate EHR. However, the
approach  may have privacy concerns as we discussed earlier.

\noindent\textbf{Differential Privacy.} 
Differential privacy (DP)~\cite{dwork2006differential} and related algorithms have
been widely studied in the literatures. Examples include Dwork \emph{et
al.}~\cite{dwork2006calibrating} for sensitivity-based algorithm, which is among 
the most popular methods that protect privacy by adding noise to mask the
maximum change of data related functions. This work laid the theoretical
foundation of many DP studies. Chaudhuri {\it et al.}~\cite{chaudhuri2009privacy,
chaudhuri2011differentially} proposed DP empirical risk
minimization. The general idea of our DP framework has the
same spirits as the objective perturbation, which is different from adding
noise directly on the output parameters. Another related framework that adds
noise on gradient is Song \emph{et al.}~\cite{song2013stochastic}, which studied
DP variants of stochastic gradient descent. In their
empirical results, the practice of moderate increasing in the batch size can
significantly improve the performance. Song \emph{et al.}~\cite{song2015learning}
followed their early work~\cite{song2013stochastic}, and studied as how to use
stochastic gradient to learn from models trained by data from
multiple sources with DP requirements (hence multiple level of
noise). A comprehensive and structured overview of DP
data publishing and analysis can be found in~\cite{zhu2017differentially},
where several possible future directions and possible
applications are also mentioned.

\noindent\textbf{Differentially Private Learning in Neural Network.}  The
applications of DP in deep learning have been studied
recently in several literatures: Abadi \emph{et al.}~\cite{abadi2016deep} studied a
gradient clipping method that imposed privacy during the training procedure.
Shokri and Shmatikov~\cite{shokri2015privacy} for multi-party privacy
preserving neural network with a parallelized and asynchronous training
procedure. Papernot \emph{et al.}~\cite{papernot2017semi} combined Laplacian
mechanism with machine teaching framework. Phan \emph{et al.}~\cite{phan2017adaptive}
developed ``adaptive Laplace Mechanism'' that could be applied in a variety of
different deep neural networks while the privacy budget consumption is
independent of the number of training step. Phan \emph{et
al.}~\cite{phan2017preserving} developed a private convolutional deep belief
network by leveraging the functional mechanism to perturb the energy-based
objective functions of traditional CDBNs.

We propose DPGAN to address the challenges appeared in the previous works.
In~\cite{papernot2017semi} the privacy loss is proportional to the amount of
data labeled in that public data set, which may bring about unbearable privacy
loss. We solve this problem by training a differentially private generator and
can generate infinite number of data points without violating the privacy of
training data. Shokri and Shmatikov~\cite{shokri2015privacy} requires the
transmission of updated local parameters between server and local task,  which
is at risk of leakage of private information. Our framework addressed this
issue by avoiding a distributed framework. Also, our work is different
from~\cite{phan2017preserving} by adding noise within the training procedure
instead of adding noise on both energy functions and an extra softmax layer.

\section{Methodology}\label{a&t}
%!TEX root = main.tex

In this section, we elaborate the proposed privacy preserving framework DPGAN. 
Without loss of generality, we
discuss the DPGAN in the context of the WGAN framework~\cite{arjovsky2017wasserstein}
while we note that the proposed DPGAN technique can also be easily extended to other GAN
frameworks. We firstly introduce differential privacy and then conduct a brief
review of GAN and WGAN. We then introduce moments
accountant~\cite{abadi2016deep}, which is the key technique in our framework to set a
bound to the probability ratio so as to guarantee the privacy in the iterative gradient
descent procedure.

\subsection{Differential Privacy}

The privacy model used in our approach is differential
privacy~\cite{dwork2011differential}. Denote an algorithm with the
differential privacy property by $A_{p}(\cdot)$. The algorithm is randomized
in order to make it difficult for an observer to re-identify the input data,
where an observer is anyone who gets outputs of algorithms using the data.
Differential privacy (DP) is defined by~\cite{dwork2013algorithmic}:

\begin{definition}(Differential Privacy, DP)
\label{DP}
A randomized algorithm $\mc{A}_{p}$ is $(\epsilon, \delta)$-differentially private if for any two databases $\D$ and $\D'$ differing in a single point and for any subset of outputs $\S$:
    \begin{align}
    \prob(\mc{A}_{p}(\D)\in \S) \leq e^{\epsilon}\cdot \prob(\mc{A}_{p}(\D') \in \S) + \delta,
    \end{align}\label{def:dp}
\vspace{-0.2in}

\noindent where
$\mc{A}_{p}(\D)$ and $\mc{A}_{p}(\D')$ are the outputs of the algorithm
for input databases $\D$ and $\D'$, respectively, and $\prob$ is the
randomness of the noise in the algorithm. 
\end{definition}
\vspace{-0.05in}

It can be shown that the definition is equivalent to:
$$\left|\log
\left(\frac{P(\mc{A}_{p}(\D)=s)}{P(\mc{A}_{p}(\D')=s)}\right)\right|\leq \epsilon,$$ with
probability $1 - \delta$ for every point $s$ in the output range, where
$\epsilon$ reflects the \emph{privacy level}. A small $\epsilon$ ($\leq 1.0 $)
means that the difference of algorithm's output probabilities using $\D$ and
$\D'$ at $s$ is small, which indicates high perturbations of ground truth
outputs and hence high privacy, and vice versa. The non-private case is given
by $\epsilon=\infty$. $\delta$ measures the violation of the ``pure''
differential privacy. That is, there exists a small output range associated
with probability $\delta$ such that for some fixed point $s$ in this area, no
matter what the value of $\epsilon$ is, one can always find a pair of datasets
$\D$ and $\D'$ so that the inequality $|\log
(\frac{P(\mc{A}_{p}(\D)=s)}{P(\mc{A}_{p}(\D')=s)})|\geq \epsilon$ holds.
Typically we are interested in values of $\delta$ so that are less than the
inverse of any polynomial in the size of the database.

According to Def.~\ref{DP} and the intuition above, the noise
protects the membership of a data point in the dataset. For example, when
conducting a clinical experiment, sometimes a person does not want the
observer to know that he or she is involved in the experiment. This is due to
the fact that observer may link the test result to the
appearance/disappearance of certain person and harm the interest of that
person. A proper membership protection would ensure that replacing this person
with another one will not affect the result too much. This property holds only
if the algorithm itself is randomized, i.e. the output is associated with a
distribution. And this distribution will not change too much if certain data
point is perturbed or even removed. This exactly what the differential privacy
tries to achieve.

\subsection{GAN and WGAN} 
Generative adversarial nets~\cite{goodfellow2014generative} simultaneously
train two models: a generative model $G$ that transforms input distribution to
output distribution that approximates the data distribution, and a
discriminative model $D$ that estimates the probability that a sample came
from the training data rather than the output of $G$. Let $p_{\z}(\z)$ be the
input noise distribution of $G$ and $p_{data}(\x)$ be the real data
distribution, GAN aims at training $G$ and $D$ to play the following two-
player minimax game with value function $V(G,D)$:
\begin{align}\label{def:gangame}
 \min_{G}\max_{D} V(G,D) & = E_{\x\sim p_{data}(\x)}[log(D(\x))] \\
 &+ E_{\z\sim p_{\z}(\z)}[log(1-D(G(\z)))]. \nonumber
\end{align}
WGAN~\cite{arjovsky2017wasserstein} improves GAN by using the Wasserstein
distance instead of the Jensen---Shannon divergence. It solves a 
different two-player minimax game given by:
\begin{align}\label{def:wgangame}
 \min_{G}\max_{w \in W} E_{{\x}\sim p_{data}({\x})}[f_{w}(\x)]- E_{{\z}\sim p_{\z}(\z)}[f_{w}(G(\z))],
\end{align}
where functions $\{f_{w}(\x)\}_{w\in W}$ are all $K$-Lipschitz 
(with respect to $x$) for some $K$. 
Our approach exploits such $K$-Lipschitz property in WGAN and solves Formula~\ref{def:wgangame} in a differentially private manner.

\begin{algorithm}[b]
\small
\caption{Differentially Private Generative Adversarial Nets}
\label{alg:DPGAN}
\begin{algorithmic}[1]
\REQUIRE $\alpha_{d}$, learning rate of discriminator. $\alpha_{g}$, learning rate of generator. $c_{p}$, parameter clip constant. m, batch size. M, total number of training data points in each discriminator iteration. $n_{d}$, number of discriminator iterations per generator iteration. $n_{g}$, generator iteration. $\sigma_{n}$, noise scale. $c_{g}$, bound on the gradient of Wasserstein distance with respect to weights
\ENSURE Differentially private generator $\theta$.
\STATE Initialize discriminator parameters $w_{0}$, generator parameters $\theta_{0}$.
\FOR {$t_{1} = 1,\ldots ,n_{g}$}
    \FOR {$t_{2} = 1,\ldots ,n_{d}$}
        \STATE Sample $\{\z^{(i)}\}_{i=1}^{m}\sim p(\z)$ a batch of prior samples.
        \STATE Sample $\{\x^{(i)}\}_{i=1}^{m}\sim p_{data}(\x)$ a batch of real data points.
        \STATE For each $i$, $g_{w}(\x^{(i)},\z^{(i)}) \leftarrow \nabla_{w} \Big[f_{w}(\x^{(i)}) - f_{w}(g_{\theta}(\z^{(i)}))\Big] $
        \STATE $\bar{g}_{w} \leftarrow \frac{1}{m}(\sum_{i=1}^{m}g_{w}(\x^{(i)},\z^{(i)}) + N(0,\sigma_{n}^{2}c_{g}^{2}I))$. 
        \STATE $w^{(t_{2}+1)} \leftarrow w^{(t_{2})} + \alpha_{d} \cdot RMSProp(w^{(t_{2})},\bar{g}_{w})$
        \STATE $w^{(t_{2}+1)} \leftarrow clip(w^{(t_{2}+1)},-c_{p}, c_{p})$ 
    \ENDFOR
    \STATE Sample $\{\z^{(i)}\}_{i=1}^{m}\sim p(\z)$, another batch of prior samples.
    \STATE $g_{\theta} \leftarrow -\nabla_{\theta} \frac{1}{m}\sum_{i=1}^{m} f_{w}(g_{\theta}(\z^{(i)})) $
    \STATE $\theta^{(t_{1}+1)} \leftarrow \theta^{(t_{1})} - \alpha_{g} \cdot RMSProp(\theta^{(t_{1})},g_{\theta})$
\ENDFOR \\
\RETURN $\theta$.
\end{algorithmic}
\end{algorithm}

\subsection{DPGAN framework} 

Our method focuses on preserving the privacy during the training procedure
instead of adding noise on the final parameters directly, which usually
suffers from low utility. We add noise on the gradient of the Wasserstein
distance with respect to the training data. The parameters of discriminator
can be shown to guarantee differential privacy with respect to the sample
training points. We note that the privacy of data points that haven’t been
sampled for training is guaranteed naturally. This is because replacing these
data won't cause any change in output distribution, which is equivalent to the
case of $\epsilon = 0$ in Definition~\ref{def:dp}. The parameters of generator
can also guarantee differential privacy with respect to the training data.
This is becuase there is a post-processing property of differential
privacy~\cite{dwork2013algorithmic}, which says that any mapping (operation)
after a differentially private output will not invade the privacy. Here the
mapping is in fact the computation of parameters of generator and the output
is the differentially private parameter of discriminator. Since the parameters
of generator guarantee differential privacy of data, it is safe to generate
data after training procedure. In short, we have: differentially private
discriminator + computation of generator $\rightarrow$ differentially private
generator. This also means that even if the observer gets generator itself,
there is no way for him/her to invade the privacy of training data.

The DPGAN procedure is summarized in Algorithm~\ref{alg:DPGAN}. In line 9, the
clipping guarantees that $\{f_{w}(x)\}_{w\in W}$ are all $K_{w}$-Lipschitz
with respect to $x$ for some unknown $K_{w}$ and act in a way to bound the
gradient from each data point. The \textsc{RMSProp} in line 8 and line 13 is
an optimization algorithm that can adaptively adjust the learning rate
according to the magnitude of gradients~\cite{hinton2012neural}.

\subsection{Privacy Guarantees of DPGAN}
To show the DPGAN in Algorithm~\ref{alg:DPGAN} indeed protects the
differential privacy, we demonstrate that the parameters of generator $\theta$
(through discriminator parameters $w$) guarantee differential privacy with
respect to the sample training points. Hence any generated data from $G$ will
not disclose the privacy of training points. Through the \emph{moment
accountant} mechanism, we can compute the final composition result $\epsilon$.
By treating parameters of discriminator $w^{(t_{2}+1)}$ (line 9 in
Algorithm~\ref{alg:DPGAN}) as one point in the output space, it is easy to see
that the procedure of updating $w$ for fixed $t_{2}$ in any loop is just the
algorithm $\mc{A}_{p}$ in definition~\ref{def:dp}. Here the input of
$\mc{A}_{p}$ is  real data and noise and the output is the updated $w$. So we
have $\mc{A}_{p}(D) = M(aux,D)$ where $aux$ is an auxiliary input, which in
our algorithm refers to the previous parameters $w^{(t_{2})}$. Hence the
update of $w^{(t)}$ (line 3 to 10 in Algorithm~\ref{alg:DPGAN}) is an instance
of adaptive composition. Together with definition~\ref{DP}, it is natural to
define the following privacy loss at $o$:
\begin{definition}
\emph{(Privacy loss)}
    \begin{align*}
    c(o;M,aux,\D,\D') \triangleq \log \frac{\mathbb{P}[M(aux,D)=o]}{\mathbb{P}[M(aux,\D')=o]},
    \end{align*}
    \label{def:pl}
\vspace{-0.2in}
\end{definition}
\noindent which describes the difference between two distributions caused by changing data.
The privacy loss random variable is given by $C(M,aux,\D,\D') = c(M(\D);M,aux,\D,\D')$,
which is defined by evaluating the privacy loss at an outcome sampled from
$M(\D)$. Note that we assume the supports of 2 distributions associated
with $M(aux,D)$ and $M(aux,\D')$ are generally the same so it is safe to
evaluate them at same point $o$. This is a critical assumption since if there is an
area $s$ in support $M(aux,D)$ but not in $M(aux,\D')$, then evaluating
$C(M,aux,\D,\D')$ in $s$ will result in $\infty$ and violate the privacy.
We define the log of the moment generating function of the privacy loss
random variable and moments accountant as:
\begin{definition}
\emph{(Log moment generating function)}
    \begin{align*}
    \alpha_{M}(\lambda;aux,\D,\D') & \triangleq \log \mathbb{E}_{o\sim M(aux,D)} [exp(\lambda  
     C(M,aux,\D,\D'))].
    \end{align*}\label{def:logmgf}
\end{definition}
\begin{definition}
\emph{(Moments accountant)}
    \begin{align*}
    \alpha_{M}(\lambda) \triangleq \max_{aux,\D,\D'} \alpha_{M}(\lambda;aux,\D,\D').
    \end{align*}\label{def:ma}
\end{definition}
Moments accountant can be seen as the ``worst situation'' of the moment
generating function. The definition of moments accountant enjoys good
properties as mentioned in~\cite{abadi2016deep} (Theorem 2), where the
composability property shows that the overall moments accountant can be easily
bounded by the sum of moments accountant in each iteration, which brings about
a result that privacy is proportional to iterations. The tail bound can also
be applied in the privacy guarantee (Theorem 1 in same paper). We will use
this theorem to deduce our own result. Comparing with strong composition
theorem~\cite{dwork2010boosting}, moments accountant saves a factor of
$\sqrt{\log (n_{g}/\delta)}$. According to the definition~\ref{DP}, for a
large iteration $n_{g}$, this is a significant improvement.

In order to use moments accountant we need $g_{w}(\x^{(i)},\z^{(i)})$ to be
bounded (by clipping the norm in Algorithm 1 in~\cite{abadi2016deep}) and add
noise according to this bound. We do not clip the norm of
$g_{w}(\x^{(i)},\z^{(i)})$ , instead we show that by only clipping on $w$ can
we automatically guarantee a bound of the norm of $g_{w}(\x^{(i)},\z^{(i)})$.

\begin{lemma}
\label{lem:gw}
Under the condition of Alg.~\ref{alg:DPGAN}, assume that the activation
function of the discriminator has a bounded range and bounded derivatives
everywhere: $\sigma(\cdot) \leq B_{\sigma}$ and 
$\sigma^{'}(\cdot) \leq B_{\sigma^{'}}$, and every data point $\x$ satisfies $\|\x\| \leq B_{x}$,
then $\|g_{w}(\x^{(i)},\z^{(i)})\| \leq c_{g}$ for some constant $c_{g}$. 
\end{lemma}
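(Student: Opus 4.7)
The plan is to bound $\|g_w(\x^{(i)},\z^{(i)})\|$ by computing the backpropagation gradient through the discriminator layer by layer and showing each factor is bounded under the given hypotheses. Since $g_w(\x^{(i)},\z^{(i)}) = \nabla_w f_w(\x^{(i)}) - \nabla_w f_w(g_\theta(\z^{(i)}))$, the triangle inequality reduces the task to bounding $\|\nabla_w f_w(\tilde{\x})\|$ uniformly over any admissible network input $\tilde{\x}$, provided $\|\tilde{\x}\|$ itself is controlled. For the real data term this is immediate from $\|\x^{(i)}\| \le B_x$; for the generator term I would either add the (mild) assumption that $g_\theta$ has a bounded output (standard in practice, since the generator's output activation is typically $\tanh$ or $\mathrm{sigmoid}$) or derive such a bound from the same activation-boundedness assumption applied to the generator network. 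Either way, the discriminator sees an input of bounded norm, which is the only thing the backpropagation argument needs.

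Next, I would push the bound through the layers of the discriminator. Writing $\a^{(0)} = \tilde{\x}$ and $\a^{(\ell)} = \sigma(W^{(\ell)} \a^{(\ell-1)})$, the assumption $\sigma(\cdot) \le B_\sigma$ gives $\|\a^{(\ell)}\| \le B_\sigma \sqrt{d_\ell}$ for every hidden layer $\ell \ge 1$, where $d_\ell$ is the layer width; for the first layer the bound is controlled by $\|W^{(1)}\|\,\|\tilde{\x}\|$, and since line~9 of Algorithm~\ref{alg:DPGAN} clips every entry of $W^{(\ell)}$ into $[-c_p, c_p]$, we have $\|W^{(\ell)}\| \le c_p \sqrt{d_\ell d_{\ell-1}}$. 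The backpropagated error signal $\bmd^{(\ell)}$ satisfies the recursion $\bmd^{(\ell)} = \mathrm{diag}(\sigma'(\cdot))\,(W^{(\ell+1)})^\top \bmd^{(\ell+1)}$, so using $\sigma'(\cdot) \le B_{\sigma'}$ and the clipped-weight bound, $\|\bmd^{(\ell)}\|$ is bounded by a product of $B_{\sigma'}$, $c_p$, and widths telescoped up to the output layer. Finally, the gradient with respect to $W^{(\ell)}$ is the outer product $\bmd^{(\ell)} (\a^{(\ell-1)})^\top$, whose Frobenius norm is $\le \|\bmd^{(\ell)}\| \, \|\a^{(\ell-1)}\|$, and stacking the per-layer gradients yields an overall bound of the form $c_g = c_g(B_\sigma, B_{\sigma'}, B_x, c_p, \{d_\ell\}, L)$, which depends only on quantities fixed before training.

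The main obstacle I anticipate is not any single inequality but careful, non-circular bookkeeping of two interacting recursions --- the forward recursion on activations and the backward recursion on error signals --- since both depend on the clipped weights, and one must avoid a bound that implicitly grows with the iterate. The clipping box $[-c_p,c_p]$ breaks this loop by giving a fixed operator-norm ceiling on every $W^{(\ell)}$ that holds at every iteration. A secondary subtlety is the output layer of the discriminator in WGAN, which is typically linear rather than passed through $\sigma$; this is actually easier since its contribution to $\bmd^{(L)}$ is just $1$, and the layer-$L$ weights enter only through the backward recursion for lower layers. Once these pieces are assembled, taking $c_g$ to be twice the resulting per-input bound (to absorb the triangle-inequality factor from the two terms in $g_w$) completes the argument.
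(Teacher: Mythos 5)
Your proposal is correct and follows essentially the same route as the paper's proof: both arguments bound $\|\nabla_w f_w(\cdot)\|$ via the backpropagation recursions, using the clipping of $W^{(\ell)}$ into $[-c_p,c_p]$ together with $\sigma \le B_\sigma$ and $\sigma' \le B_{\sigma'}$ to control the forward activations and backward error signals, invoking the bounded (sigmoid) output of the generator so the same bound covers the $f_w(g_\theta(\z^{(i)}))$ term, and absorbing the two terms with a factor of $2$. The only differences are cosmetic bookkeeping: you track operator/Frobenius norms with explicit width factors $\sqrt{d_\ell}$, whereas the paper bounds matrix products entrywise (adding a smallness condition $c_p \le 1/(m_{l+1}B_{\sigma'})$ to simplify the resulting constant), and both yield a $c_g$ depending only on $c_p$, $B_\sigma$, $B_{\sigma'}$, $B_x$, and the layer widths.
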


\begin{proof}
    % Note that these assumptions are also used in literature like~\cite{qi2017loss}. 
    Without loss of generality, we assume $f_{w}$ is implemented using 
    a fully connected network. Let $H$ be the number of layers except input
    layer. Let $\W^{(l)}$ be the $l$-th weight matrix ($l=1,\ldots ,H$) whose 
    element $\W^{(l)}_{ij}$ is the weight connecting $j$-th node in layer 
    $l-1$ to $i$-th node in layer $l$. Let $\bfD^{(l)}$ be the diagonal 
    Jacobian of nonlinearities of $l$-th layer. We thus have:
    \begin{align}
        \bfD^{(l)}_{ij} = \left\{
            \begin{array}{lll}
            \sigma^{'}(\w^{(l)}_{i,:}\sigma(\z^{(l-1)})) & \textrm{if} & i = j  \\
            0 & \textrm{if} & i \neq j 
            \end{array}
            \right.
    \end{align}
    where $\w^{(l)}_{i,:}$ is the $i$th row of $W^{(l)}$ and 
    $\sigma(\z^{(l-1)})$ is the output of the $l-1$-th layer.
 The following fact is well known from the back-propagation algorithm on a fully connected network:
 \begin{align}
    \bmd^{(H)} 
       &= \nabla_{\a}C \odot \sigma^{'}(\z^{(H)}), \label{errL}\\
    \bmd^{(l)} 
       &= ((\W^{(l+1)})^{T}\bmd^{(l+1)}) \odot \sigma^{'}(\z^{(l)}), \label{errl}\\
    \frac{\partial C}{\partial \W_{jk}^{(l)}}   
       &= \a_{k}^{(l-1)}\bmd_{j}^{(l)}, \label{wgrad}
 \end{align}
where $C$ is the cost function, $\z^{(l)}$, $\a^{(l)}$ and $\bmd^{(l)}$ are the input, output and error vector of layer $l$, respectively. 
From~\ref{wgrad} we have for $l=2,\ldots ,H$:
\begin{align}\label{partialCW}
    \frac{\partial C}{\partial \W^{(l)}} &= \bmd^{(l)}(\a^{(l-1)})^{T} \notag \\
    &= (\bfD^{(l)}(\W^{(l+1)})^{T}\bmd^{(l+1)})(\a^{(l-1)})^{T} \notag \\
    &= (\bfD^{(l)}(\W^{(l+1)})^{T} \ldots \bfD^{(H-1)}(\W^{(H)})^{T}\bmd^{(H)}) \notag \\
    &* (\a^{(l-1)})^{T} \notag \\
    &= (\bfD^{(l)}(\W^{(l+1)})^{T} \ldots \bfD^{(H-1)}(\W^{(H)})^{T}) \notag \\
    &* (\a^{(l-1)})^{T}\sigma^{'}(z^{(H)}).
\end{align}
\vspace{-0.1in}
Take $\frac{\partial C}{\partial \W^{(l_{0})}}$ as an example:
\begin{align}
    [\bfD^{(l)}(\W^{(l+1)})^{T}]_{ij} &\leq c_{p}B_{\sigma^{'}} \notag \\
    [\bfD^{(l)}(\W^{(l+1)})^{T}\bfD^{(l+1)}(\W^{(l+2)})^{T}]_{ij} &\leq (c_{p}B_{\sigma^{'}})^{2}m_{l+1}, \notag 
\end{align} 
where we assume that $ c_{p} \leq \frac{1}{m_{l+1}B_{\sigma^{'}}}$. Here $m_{l+1}$ is the number of nodes in the $l+1$th layer.
And thus we have:   
\begin{align}
    \left[\prod_{l=l_{0}}^{H-1}\bfD^{(l)}(\W^{(l+1)})^{T}\right]_{ij} &\leq (c_{p}B_{\sigma^{'}})^{H-l_{0}}\prod_{l=l_{0}}^{H-2}m_{l+1}.
\end{align}
Because of the assumption that $\sigma(\cdot) \leq B_{\sigma}$, we have
$a^{(l-1)}_{j} \leq B_{\sigma}$. Combining it with~\ref{partialCW}, we have
$[\frac{\partial C}{\partial \W^{(l)}}]_{ij} \leq c_{p}B_{\sigma}
B_{\sigma^{'}}^{2}$ and therefore we have:
\begin{align*}
&\|g_{w}(\x^{(i)},\z^{(i)})\| 
 = \left\|\nabla_{w} \Big(f_{w}(\x^{(i)}) - f_{w}(g_{\theta}(\z^{(i)}))\Big)\right\| \\
 &\leq 2\left\|\nabla_{w} f_{w}(\x^{(i)})\right\| = 2\sum\nolimits_{l}\sum\nolimits_{ij}\left[\frac{\partial C}{\partial \W^{(l)}}\right]_{ij} \\
 &\leq 2c_{p}B_{\sigma} B_{\sigma^{'}}^{2}\sum\nolimits_{k=1}^{H-1}m_{k}m_{k+1} = c_{g},
\end{align*}
where the boundness of $g_{\theta}(\z^{(i)})$ comes from the choice of sigmoid
activation in the last layer of generator. Note that when computing $c_{g}$,
we need to take into consideration the dropout rate, weight sparsity, connection percentage of
convolutional nets, and other factors.
\end{proof}

\begin{remark}
Note that activation functions like \textsc{ReLU} (and its variants)  and
\textsc{Softplus} have unbounded $B_{\sigma}$.  This will not affect our
result because both the data points and weights are bounded, which guarantees
that the output of each node in each layer is bounded. The boundness of data
comes from a common fact that each data element has a bounded range.
\end{remark}

We have the following lemma which guarantees DP for discriminator training procedure.
\begin{Lemma}
\label{lemma:criticdp}
Given the sampling probability $q=\frac{m}{M}$, the number of discriminator iterations in
each inner loop $n_{d}$ and privacy violation $\delta$, for any positive
$\epsilon$, the parameters of discriminator guarantee $(\epsilon,
\delta)$-differential privacy with respect to all the data points used 
in that outer loop (fix $t_{1}$) if we choose:
    \begin{align}
    \sigma_{n} = 2q\sqrt{n_{d}\log(\frac{1}{\delta})} \bigg/ \epsilon.
    \end{align}
\end{Lemma}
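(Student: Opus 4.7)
The plan is to invoke the moments accountant framework of Abadi et al.\ by viewing a single inner-loop discriminator update as a subsampled Gaussian mechanism, and then to compose across the $n_d$ inner iterations. First I would fix an outer iteration $t_1$ and isolate one inner update (lines 4--9 of Algorithm~\ref{alg:DPGAN}) as the mechanism $\mc{A}_p$: it draws a minibatch of size $m$ from a pool of $M$ points (sampling rate $q=m/M$), forms the sum $\sum_i g_w(\x^{(i)},\z^{(i)})$, and releases this sum perturbed by $N(0,\sigma_n^2 c_g^2 I)$. By Lemma~\ref{lem:gw}, the per-example gradient satisfies $\|g_w(\x^{(i)},\z^{(i)})\| \le c_g$, so changing a single data point in the pool changes the sum by at most $c_g$ in $\ell_2$-norm. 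Hence the $\ell_2$-sensitivity of the pre-noise statistic is $c_g$, and the added noise $N(0,\sigma_n^2 c_g^2 I)$ is exactly that of a Gaussian mechanism with effective noise scale $\sigma_n$ relative to sensitivity. The subsequent RMSProp update and the clip step are post-processing and do not increase the privacy loss.

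Second, I would import the per-step moments bound for the subsampled Gaussian mechanism (Lemma~3 of~\cite{abadi2016deep}): there exists an absolute constant such that for each inner step the moments accountant satisfies $\alpha_M(\lambda) \le q^2\lambda(\lambda+1)/\sigma_n^2$ whenever $\lambda$ is in the admissible range $\lambda \le \sigma_n^2 \log(1/(q\sigma_n))$. Then, by the composability property of the moments accountant (Theorem~2 of~\cite{abadi2016deep}), after $n_d$ consecutive inner updates, treating each previous $w^{(t_2)}$ as the auxiliary input, the aggregate moments accountant is bounded by
\begin{equation*}
\alpha(\lambda) \le \frac{n_d q^2 \lambda(\lambda+1)}{\sigma_n^2}.
\end{equation*}

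Third, I would translate moments into an $(\epsilon,\delta)$ statement via the tail bound: the composed mechanism is $(\epsilon,\delta)$-DP provided $\delta \ge \exp(\alpha(\lambda) - \lambda\epsilon)$ for some admissible $\lambda$. Substituting the bound above, it suffices that $n_d q^2 \lambda(\lambda+1)/\sigma_n^2 - \lambda\epsilon \le \log\delta$, i.e.\ $\lambda\epsilon \ge n_d q^2 \lambda(\lambda+1)/\sigma_n^2 + \log(1/\delta)$. Choosing $\lambda$ to (approximately) minimize the right-hand side, one takes $\lambda \approx \epsilon\sigma_n^2/(2 n_d q^2)$, which after substitution reduces the requirement to $\sigma_n^2 \ge 4 n_d q^2 \log(1/\delta)/\epsilon^2$, i.e.\ $\sigma_n \ge 2q\sqrt{n_d\log(1/\delta)}/\epsilon$, matching the stated bound. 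Finally, I would note that the privacy claim extends freely to data not sampled in this outer loop, since replacing such a point leaves the mechanism's output distribution unchanged (the $\epsilon = 0$ case of Definition~\ref{def:dp}).

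The main obstacle is the bookkeeping around the admissible range of $\lambda$: the per-step moments bound only holds when $\lambda$ is small enough relative to $\sigma_n$, so one must check that the optimizing $\lambda^\star \approx \epsilon\sigma_n^2/(2 n_d q^2)$ indeed lies in that range for the prescribed $\sigma_n$, and that the cross terms absorbed into the constant $2$ do not degrade the bound. A secondary subtlety is justifying that the adaptive dependence of each update on the previous $w^{(t_2)}$ is legitimately captured by the \emph{aux} slot in Definition~\ref{def:logmgf}, so that composability applies verbatim; this follows from the strong adaptive composition built into the moments accountant, but is worth stating explicitly.
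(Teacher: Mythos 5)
Your proposal is correct and follows essentially the same route as the paper: the paper's proof simply cites Theorem~1 of Abadi \emph{et al.} and combines the two inequalities $n_{d}q^{2}\lambda^{2}/\sigma^{2} \leq \lambda\epsilon/2$ and $e^{-\lambda\epsilon/2} \leq \delta$ with equality, which is exactly the moments-accountant composition plus tail-bound calculation you spell out, yielding the same $\sigma_{n} = 2q\sqrt{n_{d}\log(1/\delta)}/\epsilon$ after optimizing $\lambda$. Your added bookkeeping about the admissible range of $\lambda$ and the constants absorbed into the factor $2$ is a fair caveat, but it is the same looseness already present in the paper's (and Abadi \emph{et al.}'s) statement, not a divergence in approach.
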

\begin{proof}
The DP guarantee for the discriminator training procedure follows
from the intermediate result~\cite{abadi2016deep} (Theorem 1). We need to
find an explicit relation between $\sigma_{n}$ and $\epsilon$, i.e., how much
noise standard deviation $\sigma_{n}$ we need to impose on the gradient so
that we can guarantee a privacy level $\epsilon$, with small violation $\delta$. Combine inequality
$n_{d}q^{2}\lambda^{2}/\sigma^{2} \leq \lambda\epsilon/2$ and inequality
$e^{-\lambda\epsilon/2} \leq \delta$ in Theorem 1, we can get the result by
letting the equality hold.
\end{proof}
Lemma~\ref{lemma:criticdp} quantifies the relation between noise level
$\sigma_{n}$ and privacy level $\epsilon$. It shows that for fixed
perturbation $\sigma_{n}$ on gradient, larger $q$ leads to less privacy
guarantee (larger $q$). This is indeed true  since when more data are involved
in computing discriminator $w$, less privacy is assigned on each of them.
Also, more iterations ($n_{d}$) leads to less privacy because the observer gives
more information (specifically, more accurate gradient) for data. This
requires us to choose the parameters carefully in order to have a reasonable
privacy level.
% The privacy of each data here is just $\epsilon$
%
Finally we have the following theorem as the privacy guarantee of the 
parameters of the generator: 
\begin{Theorem}
\label{thm:dpgan}
The output of generator learned in Algorithm~\ref{alg:DPGAN} guarantees $(\epsilon, \delta)$-differential privacy.
\end{Theorem}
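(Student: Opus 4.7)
The plan is to chain three ingredients already available in the paper: (i) Lemma~\ref{lem:gw}, which guarantees that the per-example gradient norm is bounded by $c_g$, so that adding $\mathcal{N}(0,\sigma_n^2 c_g^2 I)$ on line~7 of Alg.~\ref{alg:DPGAN} is a valid instance of the Gaussian mechanism; (ii) Lemma~\ref{lemma:criticdp} together with the composability of the moments accountant from Abadi \emph{et al.}\ to aggregate privacy loss across outer iterations of $t_1$; and (iii) the post-processing property of differential privacy to transfer the DP guarantee from the discriminator parameters $w$ to the generator parameters $\theta$.

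First I would observe that each inner block, for fixed $t_1$, is exactly the setting of Lemma~\ref{lemma:criticdp}: the discriminator update is a composition of $n_d$ Gaussian mechanisms applied to subsampled batches of size $m$ from a pool of size $M$, on a gradient with sensitivity bounded by $c_g$ (Lemma~\ref{lem:gw}). Rather than invoking Lemma~\ref{lemma:criticdp} once per outer loop and then pasting the resulting $(\epsilon_0,\delta_0)$ guarantees together with advanced composition (which would cost an extra $\sqrt{n_g \log(1/\delta)}$), I would re-derive the bound by carrying the moments accountant through all $n_d n_g$ Gaussian mechanism invocations at once. By the composability clause of Theorem~2 in~\cite{abadi2016deep}, the total log moment satisfies $\alpha_{\text{tot}}(\lambda) \le n_d n_g \cdot q^2\lambda^2/\sigma_n^2$, and the tail bound converts any such uniform bound into an $(\epsilon,\delta)$ statement by picking $\lambda$ to balance $n_d n_g q^2 \lambda^2 / \sigma_n^2 \le \lambda \epsilon/2$ against $e^{-\lambda\epsilon/2} \le \delta$. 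Solving gives exactly the same form as Lemma~\ref{lemma:criticdp} but with $n_d$ replaced by $n_d n_g$, i.e., the sequence of all discriminator parameters $(w^{(t_2)})$ produced over the whole training run is $(\epsilon,\delta)$-DP with respect to the training data.

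Finally I would apply post-processing: after the inner loop at outer step $t_1$, the generator update on lines 11--13 depends on the data only through the DP discriminator parameters $w$, together with fresh prior samples $\{\z^{(i)}\}$ that are drawn independently of the training data. Therefore the mapping from $w$ to the updated $\theta^{(t_1+1)}$ is a randomized post-processing of a DP output, and the final generator parameters $\theta$ inherit $(\epsilon,\delta)$-DP with respect to the training data. In particular, any sample subsequently drawn from $G_\theta(\z)$ is also DP, completing the theorem.

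The main obstacle will be bookkeeping the composition across outer loops cleanly. In each outer iteration the algorithm touches a different (independently subsampled) set of batches, so one must be careful to state what dataset the guarantee is with respect to: the right object is the full training pool $\D$ of $M$ points, with each of the $n_d n_g$ mechanisms having subsampling ratio $q = m/M$. I expect no new ideas are needed here beyond the subsampled-Gaussian analysis already cited, but the indexing of ``auxiliary inputs'' in the adaptive composition must be set up so that at step $(t_1,t_2)$ the auxiliary input is the entire history of previous noisy gradients (including across outer loops), not just within the current inner loop; otherwise Lemma~\ref{lemma:criticdp} would only give a per-outer-loop guarantee and the composition across $t_1$ would have to be handled separately and less tightly.
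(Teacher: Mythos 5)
Your proposal is correct and follows the same basic route as the paper: bound the per-example gradient via Lemma~\ref{lem:gw}, treat each noisy discriminator update as a subsampled Gaussian mechanism analyzed with the moments accountant of Abadi \emph{et al.}, and then transfer the guarantee from $w$ to $\theta$ by post-processing, since the generator updates touch the data only through $w$ and fresh prior noise. The one place you go beyond the paper is the cross-outer-loop bookkeeping: the paper's own proof of Theorem~\ref{thm:dpgan} is a single sentence declaring it a direct consequence of Lemma~\ref{lemma:criticdp} plus post-processing, even though Lemma~\ref{lemma:criticdp} as stated only covers the $n_d$ updates within one fixed $t_1$; the composition over the $n_g$ outer iterations is left implicit (the surrounding discussion of the accountant's composability and the saved $\sqrt{\log(n_g/\delta)}$ factor shows this global treatment is what the authors intend). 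Your version, which runs the accountant over all $n_d n_g$ mechanism invocations at once and arrives at the same formula with $n_d$ replaced by $n_d n_g$, makes that step explicit and yields a cleaner statement of what $\epsilon$ the whole training run actually guarantees, at no extra cost; it is the tighter and more defensible way to write the argument, and also correctly pins down that the guarantee is with respect to the full training pool of $M$ points with subsampling ratio $q=m/M$.
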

% \begin{proof}
The privacy guarantee a direct consequence from Lemma~\ref{lemma:criticdp} followed by the post-processing property of differential privacy~\cite{dwork2013algorithmic}.
% \end{proof}

\section{Experiment}\label{exp}
%!TEX root = main.tex 

\begin{figure*}[t!]\small
\centering
\begin{tabular}{c | c | c | c}
\includegraphics[scale=0.19]{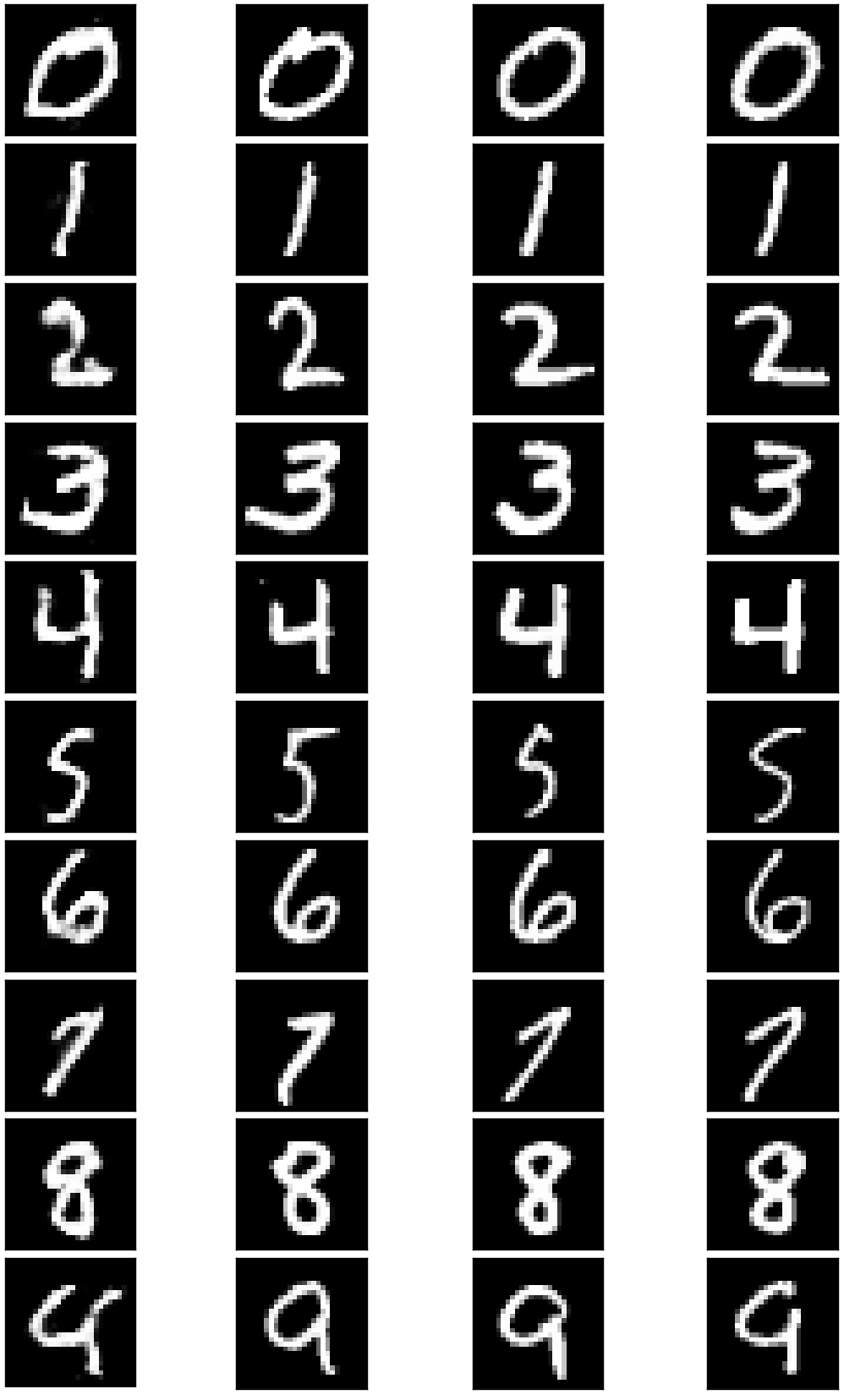} \hspace{+1in}& 
\includegraphics[scale=0.19]{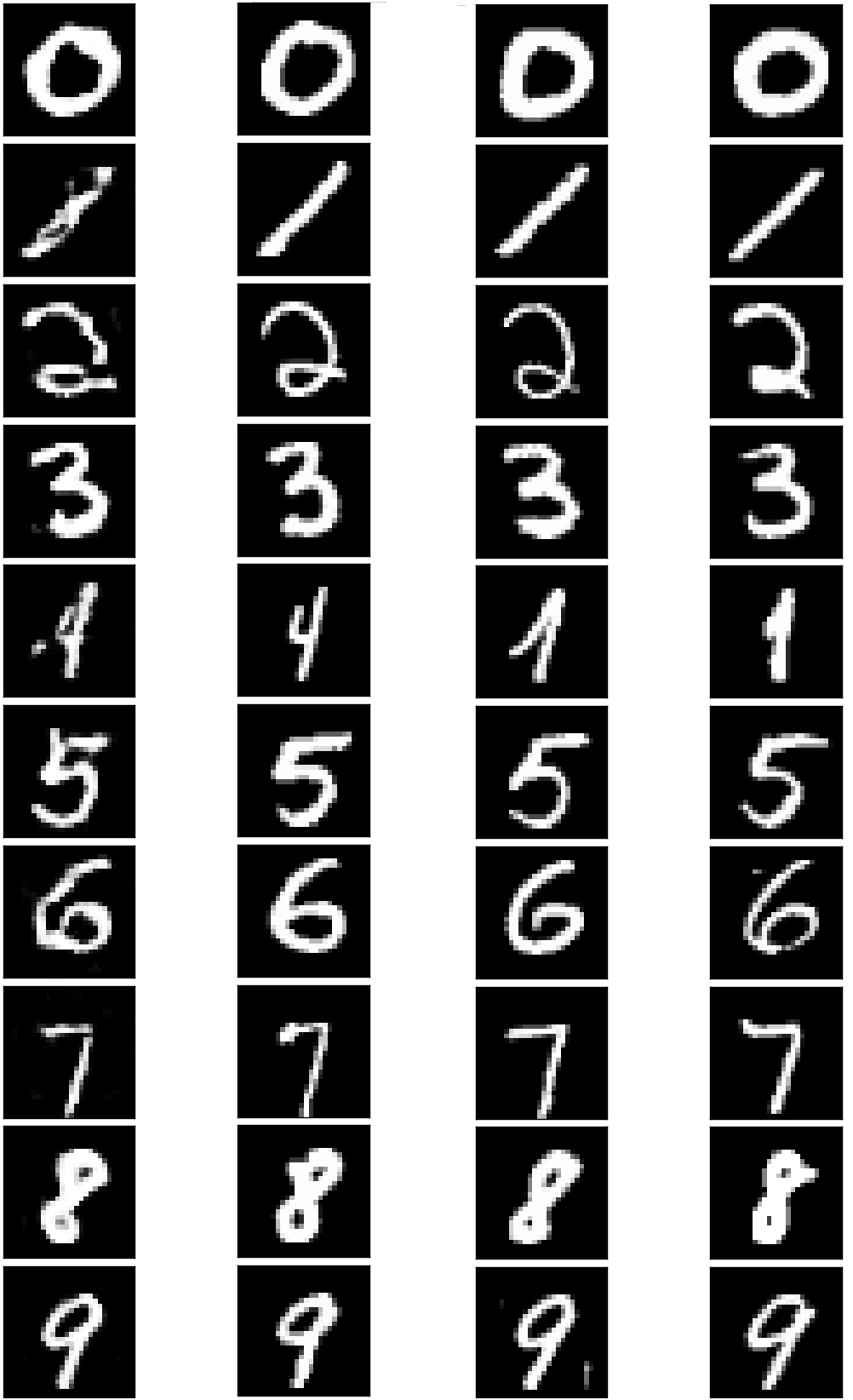} \hspace{+1in}& 
\includegraphics[scale=0.19]{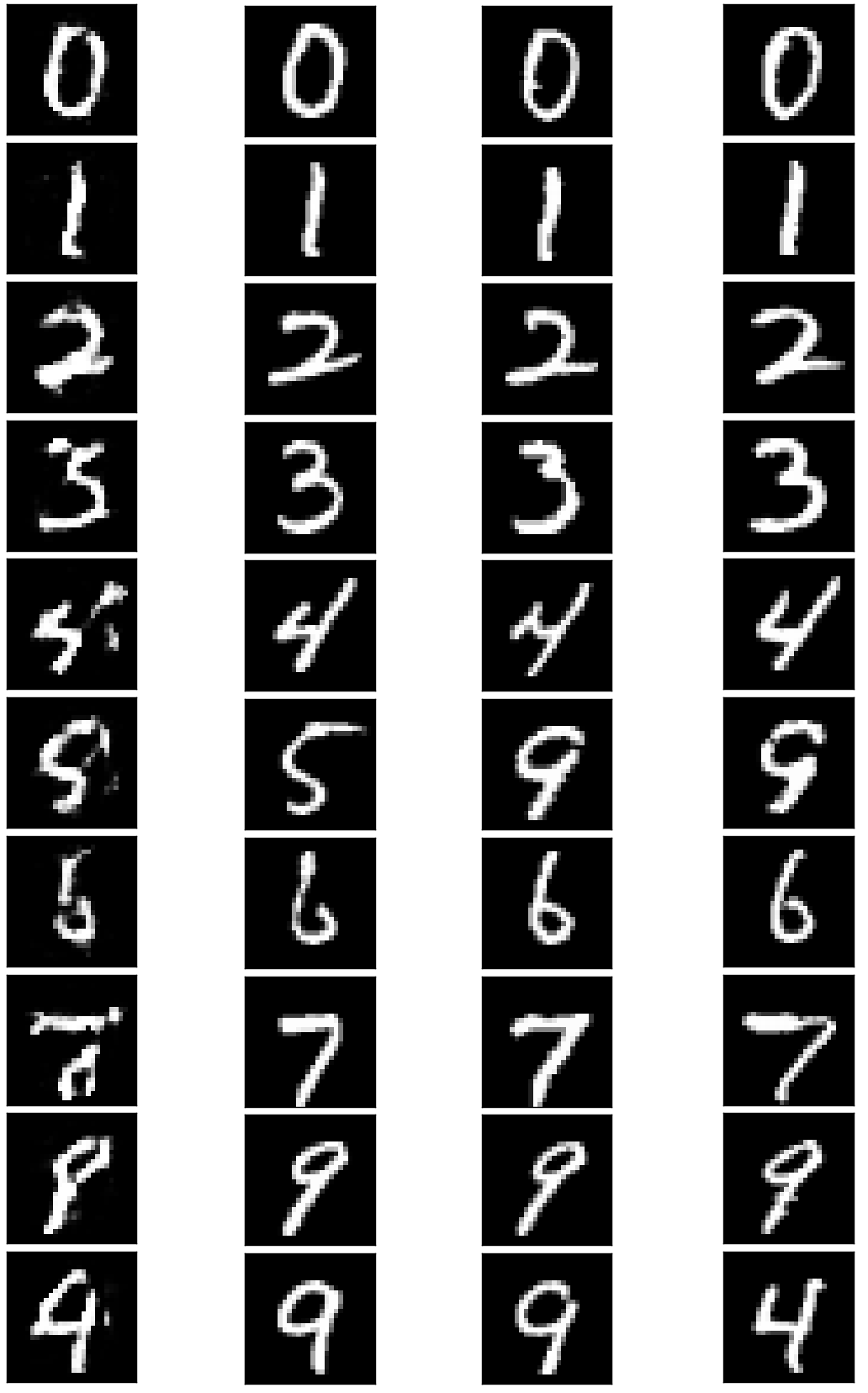} \hspace{+1in}& 
\includegraphics[scale=0.19]{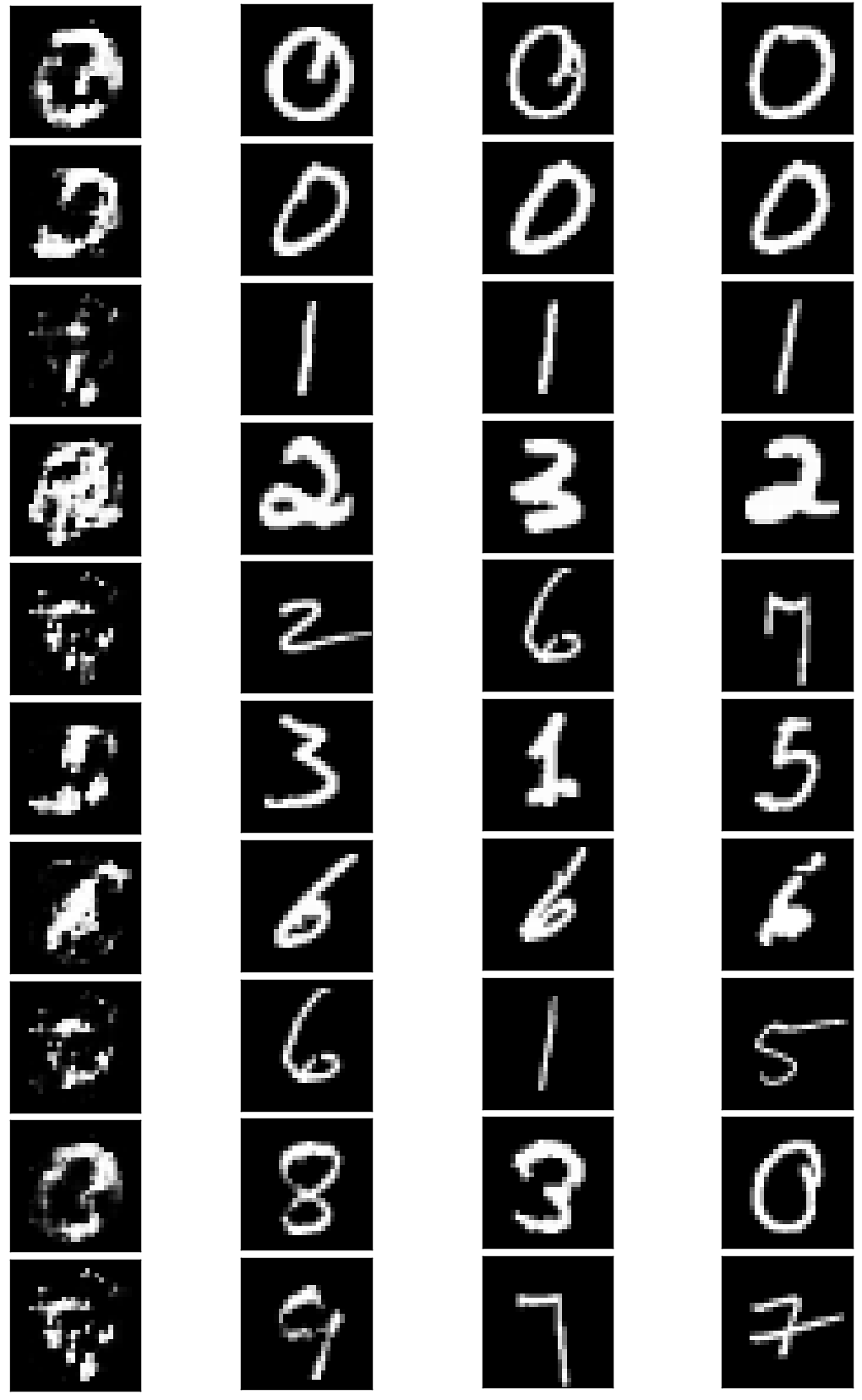} \\
(a) $\epsilon$=$\infty$ & (b) $\epsilon = 29.0$ & (c) $\epsilon = 14.0$ & (d) $\epsilon = 9.6$
\end{tabular}
\vspace{-0.1in}
\caption{Generated images with four different $\epsilon$ on MNIST dataset are plotted in leftmost column in each 
group. Three nearest neighbors of generated images are plotted to illustrate the generated data is not memorizing 
the real data and the privacy is preserved. We can see that the images get more blurred as more noise is added.}
\vspace{-0.1in}
\label{Exp1part1}
\vspace{-0.1in}
\end{figure*}

In this section, we will present extensive experiments to investigate how the
noise will affect the effectiveness of generative network on two benchmark
datasets (MNIST and MIMIC-III)\footnote{Code and experiment scripts are available at:
\url{https://github.com/illidanlab/dpgan}}. There are several notable findings
that are worth highlighting. The Wasserstein distance converges as the
training procedure goes on and exhibits fluctuation in the late stage in the
case of privacy. This fluctuation correlates well with the quality of
generated data and reflects the privacy level. In addition, our framework can
be generalized under various network structures and applied on many benchmark
datasets.

\subsection{Relationship between Privacy Level and Generation Performance}\label{subsec:1}

We conduct experiments on MNIST dataset to illustrate the relationship between
the privacy level and the quality of output images from the generator.

In this experiment, we set both the learning rate of discriminator
$\alpha_{d}$ and generator $\alpha_{g}$ to be $5.0 \times 10^{-5}$.  The
parameter clip constant $c_{p}$ is $1.0\times 10^{-2}$ such that the weights
of discriminator will be clipped back to $\left[-c_{p}, +c_{p}\right]$. We use
MNIST's training data with data size $M = 6 \times 10^{4}$ and the batch size
$m$ is set to be 64. Hence the sample probability $q$ is $\frac{m}{M} =
\frac{64}{6 \times 10^{4}}\approx 1.1 \times 10^{-3}$ . The noise scale
$\delta$ is $10^{-5}$, and the number of iterations on discriminator ($n_{d}$)
and generator ($n_{g}$) are $5$ and $5 \times 10^{5}$, respectively. Since we
use leaky ReLU as the activation function on discriminator network and ReLU on
generative network, we have $B_{\sigma^{'}} \leq 1$, where $B_{\sigma^{'}}$ is
the bound on the derivative of the activation function. Dimension of $\z$ is
100 and every coordinate is within $\left[ -1,1\right]$. We adopt similar
network structure of  DCGAN~\cite{radford2015unsupervised} with noise
generation and inference parts to protect data privacy, of which the
effectiveness has been verified in \cite{arjovsky2017wasserstein}. To impose a
certain level of noise on the network, we choose Gaussian noise with zero mean
(hence no bias) and multiple values of standard deviation. Gaussian
distribution is widely used in privacy-preserving algorithm (see Gaussian
mechanism and its variants in~\cite{dwork2013algorithmic}) and usually
results in ($\epsilon, \delta$)-differential privacy. We add
$L_{2}$-regularization on the weights of generator and discriminator, which
has little impact on our bound in Lemma~\ref{lem:gw}.

In the first experiment we investigate how the change in noise level affects
the image quality. Four groups of the generated images are plotted and shown
in Figure~\ref{Exp1part1}, corresponding to 4 different $\epsilon$ values. In
each group, the leftmost column shows the generated images for a certain
$\epsilon$ value. The rest three columns are the corresponding nearest
neighbor images from the training set,  which demonstrates that the
distortions of images are caused by noise instead of bad training images. The
distance between training images and generated images is Euclidean norm.
Comparing the generated images with their nearest neighbors, it is clear to
see that our model is not simply to memorize the training data but to be
capable of generating photographic samples with unique details. As mentioned
in~\cite{goodfellow2014generative}, these images indeed come from actual
samples of the model distributions, rather than the conditional means given
samples of hidden units. Most importantly, the generated images of each group
in Figure~\ref{Exp1part1} shows that, the larger the variance of noise is, the
blurrier the generated images would be, when all other conditions are the
same. In the sense of differential privacy, any observer who gets the
generated images can hardly know whether a data point is involved in the
training procedure or not, as elaborated in Theorem~\ref{thm:dpgan} and
illustrated by the generated images in Figure~\ref{Exp1part1}. The observer
has no way to reconstruct the training images in such case and hence the
privacy of data is protected. This demonstrates that our model successfully
addresses the privacy issue mentioned previously. The noise level ($\epsilon$)
is recommended to be tuned in a large range to guarantee good quality of
generated images. In addition, it can be seen from the results that our method
does not suffer from mode collapse or gradient vanishing, which is an
advantage that is inherited from the WGAN network structure.

\begin{figure*}[t!]
\centering
\begin{tabular}{c c c c}
\includegraphics[scale=0.18]{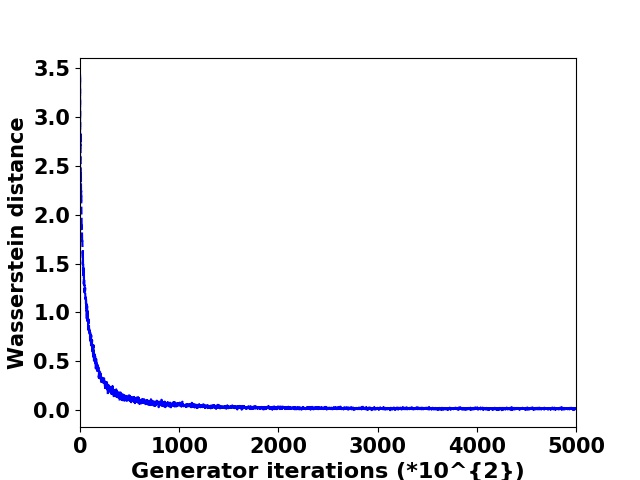} &
\includegraphics[scale=0.18]{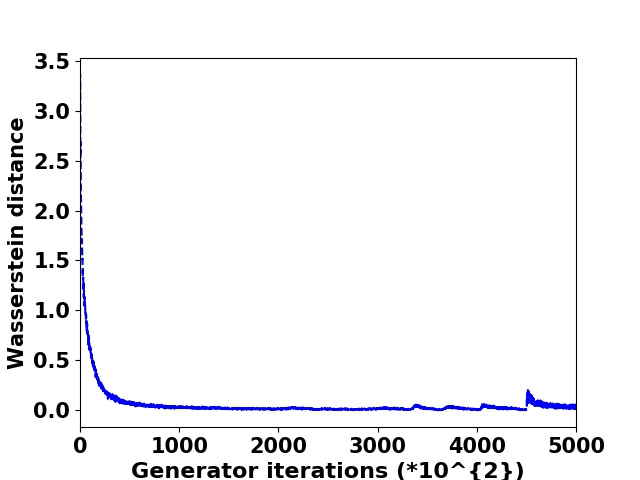} &
\includegraphics[scale=0.18]{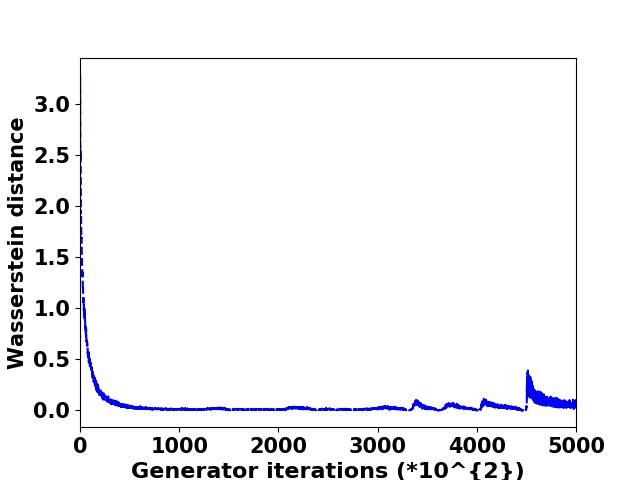} &
\includegraphics[scale=0.18]{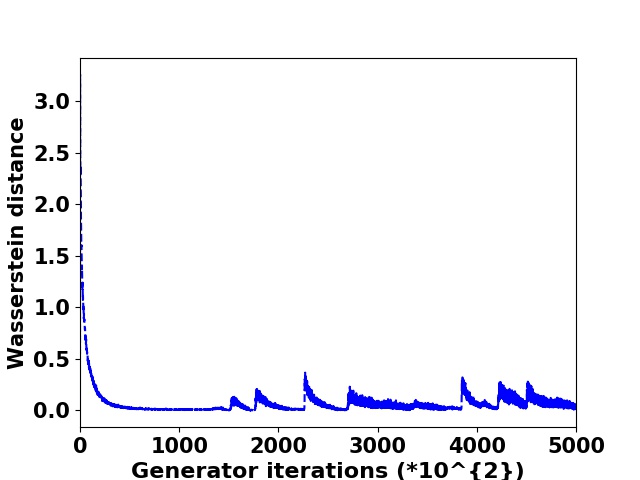} \\
(a) $\epsilon$=$\infty$ & (b) $\epsilon = 29.0$ & (c) $\epsilon = 14.0$ & (d) $\epsilon = 9.6$
\end{tabular}
 \vspace{-0.15in}
 \caption{Wasserstein distance for different privacy levels when applying DPGAN on MINST. We can see that the curves converge and exhibit more fluctuations as more noise is added.}
 \vspace{-0.05in}
 \label{fig:Exp2}
 \vspace{-0.05in}
\end{figure*}
\vspace{-0.1in}
\begin{figure*}[t!]
\vspace{-0.1in}
\centering
\begin{tabular}{c c c}
\includegraphics[scale=0.35]{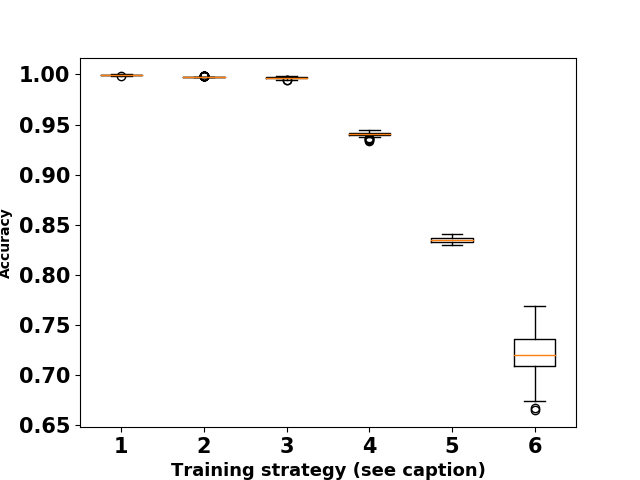} &
\includegraphics[scale=0.35]{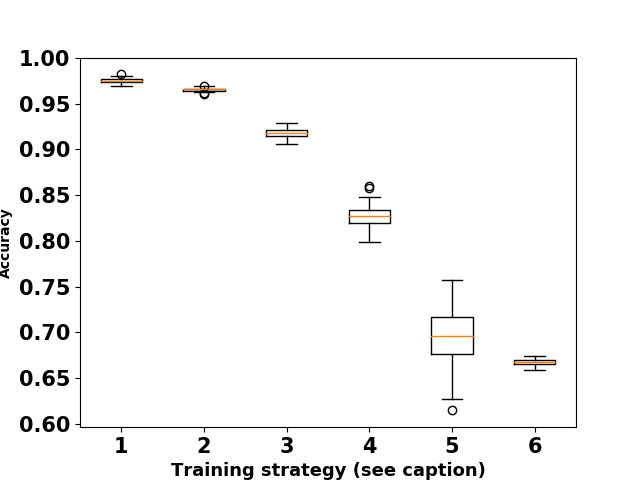} &
\includegraphics[scale=0.35]{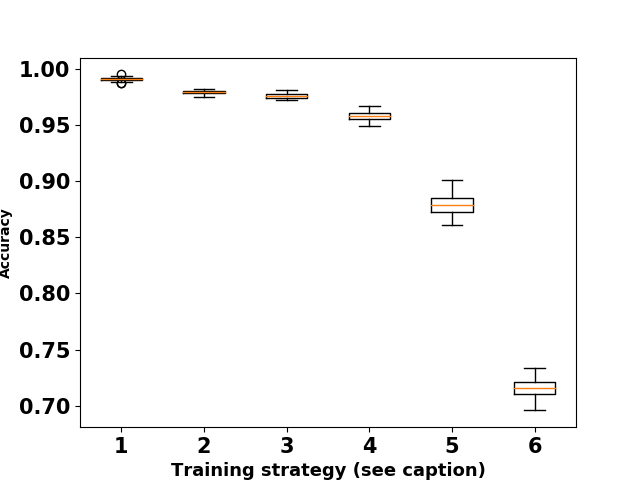} \\
(a) Digit 0 and 1 & (b) Digit 2 and 3 & (c) Digit 4 and 5
\end{tabular}
 \vspace{-0.15in}
 \caption{Binary classification task on MNIST database with different training strategies. From left to right we use training data, generated data without noise, generated data with $\epsilon=11.5,3.2,0.96,0.72$. We can see that as less noise is added, the accuracy of classifier build on generated data gets higher, which indicates that the generated data has better quality.}
 \vspace{-0.1in}
 \label{fig:subsec3}
\end{figure*}

\vspace{-0.03in}

\subsection{Relationship between Privacy Level and the Convergence of Network.}\label{subsec:2}
 
In the second experiment, we plot the Wasserstein distance for every 100
generator iterations. The result shows that the Wasserstein distance decreases
during training and converges in the end, which also correlates well with the
visual quality of the generated samples~\cite{arjovsky2017wasserstein}. The
corresponding results are shown in Figure~\ref{fig:Exp2}.  As expected, the
Wasserstein distance decreases as the training procedure goes on and
converges, which is the result of joint effect of discriminator and generator.

Despite the fluctuation caused by the min-max training itself, we can also observe
that, a smaller $\epsilon$ (hence larger noise) leads to more frequent
fluctuation and larger variance, which is especially clear in the latter half
of the curves. This conforms to the common intuition that more noise will
results in a more blurry image, which is also consistent with the results of the
previous experiment. 
One interesting phenomena is that the peaks often appear after
the convergence of Wasserstein distance. More evidences show that this might be
caused by clipping the weight. The reason is that clipping weights is
equivalent to adjusting the gradient $g_{i}$ in directions whose the corresponding gradient
$w_{i}$ magnitude is too large ($|w_{i}|>c_{p}$). 
Different from gradient
descent step (even with noise) which always changes the weight towards the
optimal solution, the effect of such adjustment is hard to predict and hence
might cause instability. This is especially clear when network converges. However,
these peaks can be quickly eliminated during the training procedure and the
network may maintain a numerical stability. This is due to the fact that the
generator is in convergence stage, which is one of the advantages of
adversarial networks. Hence our system does not suffer from divergence problem. 
Again, this experiment demonstrates the most important property of a learning system with
differential privacy consideration: there exists a trade-off between learning
performance and privacy level.

\vspace{-0.02in}

\subsection{Classification on MNIST Data}\label{subsec:3} 

In this section we conduct a binary classification task to further evaluate
the quality of the generated MNIST data. Here we use the same settings as in
subsection 1~\ref{subsec:1}. Take a pair of digits 0 and 1 as an example, we
generate 0s and 1s from their own training samples (use all samples) separately,
with different $\epsilon$ values. For each digit, we generate equal number of
data as training samples. Then for fixed $\epsilon$ (and for training set), we
randomly select 4000 samples from generated data (contains 2000 for both 0 an
1), build classifiers on them and test on MNIST's testing set. Then we repeat
this for 100 times and show the accuracy (Figure~\ref{fig:subsec3}) on testing set with
classifiers built from training data and generated ones with different
standard deviations. 
Finally we run the same procedures for digit pairs 23 and 45, as well.

The results are shown in Figure~\ref{fig:subsec3}. Despite the fact that
smaller noise makes the accuracy higher (better generated quality), the
variance of plot also decreases generally. The generate quality is little
affected below some threshold (for example, somewhere between 3.0 and 11.0 for
digit 01). Thus it is recommended to choose an $\epsilon$ larger than that
threshold (add less noise) so that the generated data will not be affected
much. Note that a threshold between 3.0 and 11.0 is quite promising privacy
level. Comparing  among three figures, digit pairs 01 performs better than the
rest two, which is due to the reason that the shapes of digit 0 and 1 make
them easy to be separated. This experiment use classification task to
demonstrate the trade-off between learning performance and privacy level.

\begin{figure*}[t!]
\centering
\begin{tabular}{c c c c}
\includegraphics[scale=0.26]{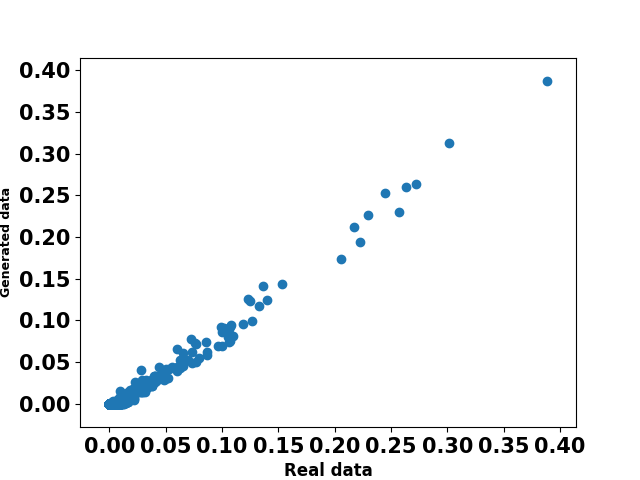} &
\includegraphics[scale=0.26]{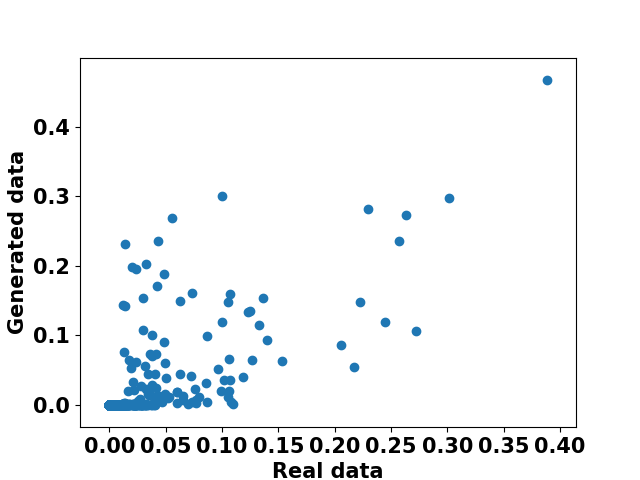} &
\includegraphics[scale=0.26]{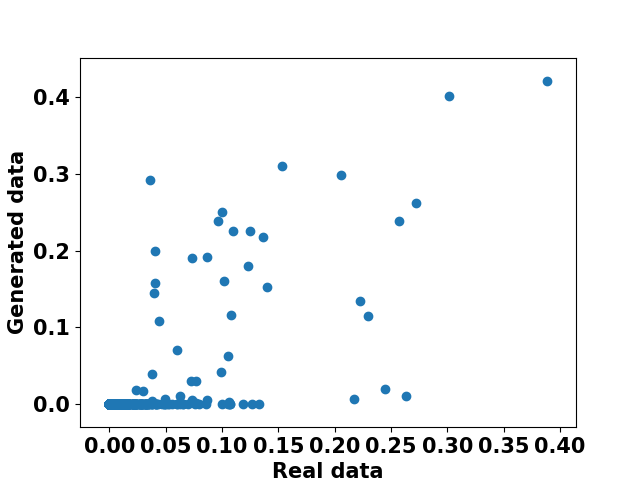} &
\includegraphics[scale=0.26]{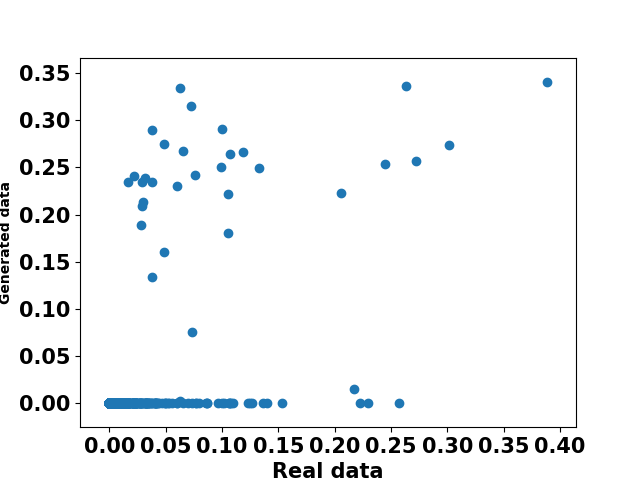} \\
(a) $\epsilon$=$\infty$ & (b) $\epsilon$=$2.31\times10^{2}$ & (c) $\epsilon$=$1.39\times10^{2}$ & (d) $\epsilon$=$96.5$
\end{tabular}
 \vspace{-0.15in}
 \caption{DWP evaluation on MIMIC-III database with different $\epsilon$ values (1070 points). We can see that as more noise is added, the distribution of generated data in each dimension becomes more deviated from the real training data.}
 \vspace{-0.1in}
 \label{fig:Exp4}			
\end{figure*}
\vspace{-0.2in}
\begin{figure*}[t!]
\centering
\begin{tabular}{c c c c}
\includegraphics[scale=0.26]{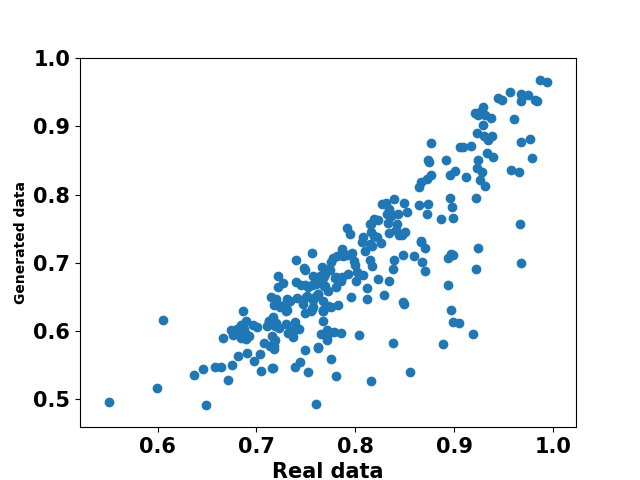} &
\includegraphics[scale=0.26]{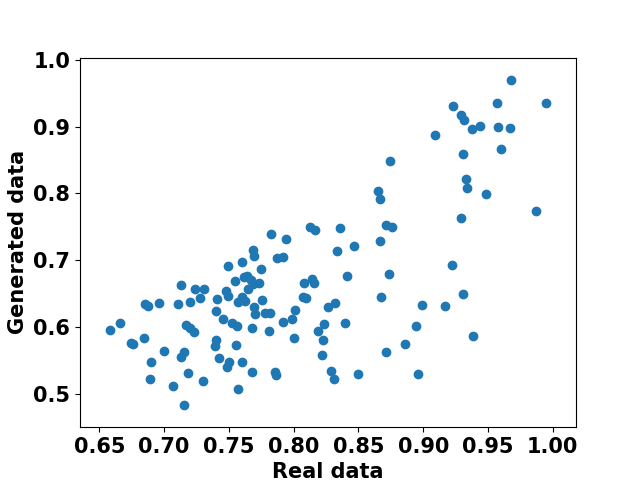} &
\includegraphics[scale=0.26]{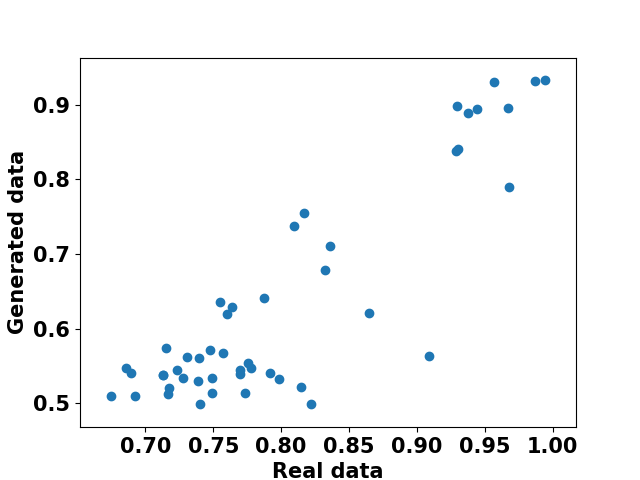} &
\includegraphics[scale=0.26]{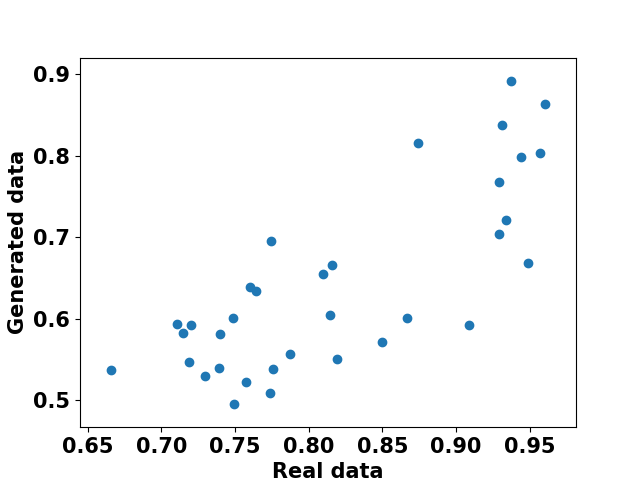} \\
(a) $\epsilon$=$\infty$ & (b) $\epsilon$=$2.31\times10^{2}$ & (c) $\epsilon$=$1.39\times10^{2}$ & (d) $\epsilon$=$96.5$
\vspace{-0.1in}
\end{tabular}
 \vspace{-0.05in}
 \caption{Dimension-wise prediction evaluation on MIMIC-III database with different $\epsilon$ values. We can see that as more noise is added, AUC value of classifier build from generated data gets lower and the data gets sparser.}
 \vspace{-0.1in}
 \label{fig:Exp5}
\end{figure*}

\vspace{0.1in}

\subsection{Electronic Health Records}\label{subsec:4}

In this section we apply DPGAN to generate Electronic Health Records (EHR)
while the privacy of patients is needed to be protected. EHR is one of the most
important information sources from which we can learn the genetics and
biological characteristics of certain population. However, the access to EHR
requires administrative permission in consideration of the privacy protection,
which is very inconvenient to the research community. 
Choi \emph{et al.} proposed
medGAN \cite{choi2017generating}, which can successfully generate EHR based on
MIMIC-III critical care
datasets~\cite{johnson2016mimic,goldberger2000components}, while the sensitive
information is not guaranteed to be protected. MIMIC-III is a well-known
public EHR database consisting of the medical records of 46,520 intensive care
unit (ICU) patients over 11 years old. In our experiments we use the extracted
ICD9 codes\footnote{International Statistical Classification of Diseases and
Related Health Problems, 9th edition} only, and group them using their first
3 digits. For each patient (1 out of 46520) in each admission to one hospital,
we record what kind of diseases this patient has and make it into a hot
vector. For example, patient A has been diagnosed with 3 diseases (with ICD9
codes 9, 42 and 146, respectively) in one admission and we use a vector to
represent the patient A's visit, where the vector has digit 1 in position 9,
42 and 146, and has digit 0 in the rest positions. Then we add up all vectors (different
admissions and different hospitals) of a certain patient and hence each patient
has one and only one vector $x \in Z_{+}^{|C|}$ with $|C| = 1071$. We then
binarize the data, where all non-zero elements are transferred to 1. These
vectors serve as summary of historical record of each patient's health
condition and can be considered as a feature for patients. Together we
can also extract useful information from these vectors. 
Notice that we remove the patient data
with missing values before feeding them into network.
 
Similar to previous experiments, we set the learning rates of both the 
discriminator $\alpha_{d}$ and generator $\alpha_{g}$
to be $5.0 \times 10^{-4}$. The parameter clip constant $c_{p}$ is $0.1$ and
$n_{d}$ is equal to 2. Also we have $m=500$, $M=46520$ for MIMIC-III dataset
and $q=\frac{500}{46520} \approx 1.1 \times 10^{-2}$. The $\delta$ is set as
$10^{-5}$. We adopt the same network structure as
in~\cite{choi2017generating}. After generating the data, we set a threshold
at 0.5 to convert the generated data matrix from continuous domain to binary domain. Since
the quality of EHR cannot be observed as images directly, we adopt the dimensional
wise probability (DWP)~\cite{choi2017generating} as a quantitative measurement
for the quality of the generated data, which is to check whether the model has
learned each dimension’s distribution correctly. Through DWP we study how the
performance of DPGAN varies with the changing of noise level.

% on how the network is affected by the noise.

The results are shown in Figure~\ref{fig:Exp4} for different noise magnitudes.
Each point in the figure is a pair of float numbers that represents Bernoulli
success probability of real data (x-axis), and generated data (y-axis) of one
dimension (corresponding to one disease). The Bernoulli success probability
(of each dimension) is the sample mean of that dimension (Maximum likelihood
estimation of independent of Bernoulli trials), which is a portion of 1 in
that column. This characterizes the rareness of that disease and hence
together reflects the distribution of diseases among population, which is a
very important statistical characteristic and can be frequently queried. Hence
there is a must to protect the people who provide this distribution by adding
noise. Despite the theoretical result in~\ref{thm:dpgan}, we can understand
the privacy protection in a intuitive way: on one hand, if no noise added
(Figure~\ref{fig:Exp4} (a)), changing database by adding one person may change
the frequency of certain disease in some extent. This change is especially
significant when the number of people in database is small or a group of
people is changed (See "group privacy" in~\cite{dwork2014algorithmic}). By
looking at this change, an observer may make some conclusions and harm the
interest of anyone who involves in the database. For example, adding a group of
people may enlarge the frequency of certain disease, if this disease is highly
related with the quality of life or it is some rare disease, health insurance
company may raise people's premiums. On the other hand, if there is
noise added (Figure~\ref{fig:Exp4} (b) to (d)), observer is not sure what is
the effect by adding this person (or these people) because the output is
uncertain (associated with a noise distribution) and the
generated data will hardly leak any patient's privacy information. 
This uncertainty gets larger when more noise is added, which can be seen from
Figure~\ref{fig:Exp4}. On the whole, it can be seen from this experiment that our
model indeed provides protection in the sense of differential privacy on
the medical data, and solves the problem we mentioned in abstract.

Note that the rareness of diseases are also well protected due to the
perturbation of noise. Assuming that there is a public-available generated EHR
data that are generated based on the EHR of a certain population, the insurance
company may raise the insurance premium for those who get rare diseases,
based on the statistical information inferred from generated EHR data. Since DPGAN
may change the rareness of diseases, the issuarance company cannot get this type of information
accurately from our generated data, thus the interest of this group of
people is guaranteed.

The results also indicate how well the generative model captures training
data's distribution. In Figure~\ref{fig:Exp4} (a), most of the points are
concentrated around line $y=x$, which indicates that our model captures each
dimension's distribution correctly. It can also be seen from
Figure~\ref{fig:Exp4} (left to right) that a large variance of noise makes more
points deviated from line $y=x$. This means that for one disease, the rareness
of generated data becomes more different from real data, which also indicates the
quality of generated data is degraded. This phenomenon matches our intuition 
that applying a higher level of noise often leads to a worse distribution approximation,
which is also consistent with evidence in Figure 2 (a)
in~\cite{choi2017generating}.

\vspace{-0.1in}
 
\subsection{Classification on EHR Data}\label{subsec:5}

Continue with previous sub-section, we use dimension-wise prediction
(DWpre)~\cite{choi2017generating} to evaluate how well the generative model
recovers the relationship among the dimensions of the data. The basic idea of
DWpre is to select the same column from training set and generated set as
target and set the rest columns as feature. Then we build logistic regression
classifiers on both of them and test on testing set. One assumption here is
that a closer performance of two classifiers indicates better quality of the
generated set. Due to the highly unbalanced testing data (0 is dominated), we
use AUC as the measurement here.

The results are shown in Figure~\ref{fig:Exp5}. Despite the fact that in most
cases, classifiers trained from real data perform better than classifiers
trained from generated data, the AUC values of generated data decrease as the
decreasing of the $\epsilon$ (more noise added). This is due to the reason
that noise perturbs the training of discriminator and affects the generator
indirectly, which leads to the deviation of output distribution from the real
one and can results in poor testing performance. It can also be seen that
there is not much decreasing in the performance, which is one of the
advantages of our model. The points get sparser as more noise is added, which
reflects another impact of noise on data. This is due to the reason that we
use logistic regression to perform binary classification, which does not allow
uni-label column. The sparse column are widely exists in original data and it
is harder for the generative model to capture the sparsity of certain column
of original data if there is more perturbation. More columns are learned as
all-zero and discarded when selected as target in classification task. In
summary, higher privacy results in less ability for generative model to
capture the inter- dimensional relationship. Also our framework successfully
addresses the issue in differential privacy system that adding noise will
cause too much decreasing in system performance.

\section{Conclusion}\label{c&f}
%!TEX root = main.tex

In this paper, we proposed a privacy preserving generative adversarial network
(DPGAN) that preserves privacy of the training data in a differentially
private sense. Our algorithm is proved rigorously to guarantee the
($\epsilon, \delta$)-differential privacy. We conducted two experiments to
show that our algorithm can generate data points with good quality and
converges under the condition of both noisy and limitation of training data,
with meaningful learning curves useful for tunning hyperparameters. For future
work we will consider reducing the privacy budget by trying different ways of
clipping, and also tighten the utility bound.

\begin{acks}
This research is supported in part by 
National Science Foundation under Grant
IIS-1565596 (JZ), IIS-1615597 (JZ), IIS-1650723 (FW) and IIS-1716432 (FW). and the Office of Naval Research under
grant number N00014-14-1-0631 (JZ) and N00014-17-1-2265 (JZ).
\end{acks}

\bibliographystyle{ACM-Reference-Format}
\bibliography{sample-bibliography-biblatex}

\end{document}